\DeclareMathOperator{\Exp}{\mathbb{E}}
\newcommand\ind[1]{1_{\left[#1 \right]}}
\title[Reciprocity in Machine Learning]{Reciprocity in Machine Learning}
\author{Mukund Sundararajan (Google), Walid Krichene(Google Research)}
\date{Feb 2022}
\newtheorem{proposition}{Theorem}[section]
\newtheorem{remark}{Remark}[section]
\newcommand\trackin{\texttt{TracIn}}
\newcommand\marginal{\texttt{Marginal}}
\newcommand\influence{\texttt{Influence}}
\begin{abstract}
  
  Machine learning is pervasive. It powers recommender systems such as Spotify, Instagram and YouTube, and health-care systems via models that predict sleep patterns, or the risk of disease. Individuals contribute data to these models and benefit from them. Are these contributions (outflows of influence) and benefits (inflows of influence) reciprocal?
  
We propose measures of outflows, inflows and reciprocity building on previously proposed measures of training data influence. Our initial theoretical and empirical results indicate that under certain distributional assumptions, some classes of models are approximately reciprocal.

We conclude with several open directions.
\end{abstract}
\begin{document}
\maketitle

\section{Introduction}

\subsection{Machine Learning and User Data}

Machine Learning uses training data to build models that produce useful predictions on unseen data points. Sometimes training data comes from a population of individuals, and the model is used to benefit the same population. Consider recommender systems such as Spotify, Instagram or YouTube that recommend items (e.g. music, videos, posts) to their users. Past interactions of users with the items are used to learn preferences of users and characteristics of content. Alternatively, consider initiatives that collect health data from several individuals and use this to build models that predict sleep, or the risk of disease~\cite{fib,dr,cancer, hospitalization, death}. 

\subsection{Reciprocity}
\label{sec:reciprocity}

In this paper we are \emph{only} focused on value exchange that happens via the machine learning. For recommendation systems, we are not concerned with the value that arises from the creation of content, just that from the implicit curation that happens via the machine learning. 

There is a flow of influence from training data points to predictions. For a specific individual (the protagonist), we can partition these flows three ways. 

There is the \emph{self-influence}, i.e., the benefit of the protagonist's data on their own predictions. For instance in a recommender system, an individual's past interactions with items (content) can be used to learn their preferences, which is used to serve them recommendations.

Then, there is the \emph{outflow}, i.e., the influence of the protagonist's training data on other individuals. For instance in a recommender system, an individual's past interactions with content can be used (in conjunction with data from other individuals) to learn characteristics of items, which are then used to serve recommendations to other individuals. 

Finally, there is the \emph{inflow}, which is just the converse of the outflow, i.e., the influence of other individual's data on the protagonist.  

The goal of this paper is to discuss \emph{the balance between outflows and inflows}, i.e., whether they are \emph{reciprocal}. We will not be too concerned with self-influence.

Reciprocity does not require that the outflows (or inflows) across individuals are equal or similar. Indeed, in a recommender system a heavy user is likely to have larger outflows (and inflows) in comparison to a light user. Reciprocity only requires that the outflows and inflows are balanced individual-by-individual, i.e., an individual who contributes a lot, benefits a lot, and a individual who contributes a little also benefits a little.\footnote{Contrast this with differential privacy~\cite{dpsgd}, where one seeks to control the magnitude of outflows.}

Reciprocity does not even require that inflows and outflows be positive. Indeed, in a recommender system, the data of an individual with atypical tastes may hurt the recommendations to other individuals. Reciprocity only demands that if inflow is negative, outflow should be equally negative.

\subsubsection{Why study reciprocity?}

Reciprocity has been studied in various fields. Sociologists have identified reciprocity as a social norm that pervades societies over the ages and contexts. Gouldner~\cite{norm} quotes the Roman statesman/philosopher Cicero  as saying: ``There is no duty more indispensable than returning an act of kindness". It has been discussed as a basis of law by Rawls~\cite{rawls} who says that citizens must believe that other citizens can reasonably accept the enforcement of a particular set of basic laws for them to follow the law. In economics, Fehr and Gächter~\cite{Fehr} point out that reciprocity is a complement to contracts---it is necessary for enforcement. They also point out that reciprocity can be a substitute for contracts, that contracts remain incomplete or are absent when there is already sufficient reciprocity. Reciprocity is also a concern in matching markets such as kidney exchanges~\cite{kidney} and student exchange programs~\cite{student-exchange,Erasmus}; Reciprocity is essential to incentivize these markets, and is accomplished by protocol design or by the payment of monies.   

Our motivation is mostly related to the sociological one. As we discussed earlier, machine-learning models pervade our online existence. We both contribute data, and we benefit. So is there balance between these?

\subsection{Sources of Non-reciprocity}

Some classes of machine learning models may be inherently non-reciprocal. For example, consider k-Nearest Neighbors (kNN).

\begin{remark} [k-Nearest Neighbors]
 In kNN, there is a distance function $d$ on points in feature space and the prediction of the algorithm is the mean (for regression) or majority (for classification) of the $k$ nearest neighbors as per the distance function $d$ of the prediction point. 
 
Suppose there is an outlier individual with outlier data points in the feature space. This individual will not have any influence on the other individuals because their training examples will not appear in the top $k$ nearest list for prediction points of other individuals. However, their predictions will benefit from the training data of other individuals.
\end{remark}

The time sequence of individual interactions can also affect the strength of reciprocity.

\begin{remark}
\label{rem:arrow}
Machine learning operates on training data. Training data comes from past interactions of the individuals with the system (referred to as the training set). The model is then used to benefit future interactions (referred to as the deployment set). This process defines the flow of contributions from the past to the future. If all of an individual's interactions with the system are late in the system's lifetime, this individual will benefit but not contribute.
\end{remark}

To control for time effects, we make the following stationarity assumption for our theorems and experiments:

\begin{remark}
\label{rem:dist}
There is a population $U$ of individuals and a set $X$ of features and a set $Y$ of labels. We assume that the training and deployment sets are both drawn IID from a joint distribution over $U \times X \times Y$.

Notice that this allows for individuals to differ in characteristics and in the frequency with which they interact with the system. The assumption requires that an individual's data be equally likely to occur in training or deployment.

In practice, this stationarity may not hold. The data distribution can change: in the recommendation example, a new user might contribute few examples to the training set but have many more in the deployment set.
\end{remark}

\subsection{Contributions of Work}

First, we propose a measure of reciprocity using techniques that quantify the influence of training data points on predictions~\cite{tracin, hat}. Such a technique produces a measure of the influence of a single training example on a single prediction. We use this to measure the inflow (benefit) and outflow (contributions), and then define a measure of reciprocity. A model is $\alpha$-reciprocal for an individual if the ratio of inflowing benefit to outflowing contributions for the individual are in the range $[\alpha, 1/\alpha]$ for some $\alpha \in [0,1]$.

Then, we prove a theorem that shows that under certain distributional assumptions, models trained using stochastic gradient descent are strongly reciprocal. The theorem indicates that when the training and deployment sets are drawn from the same joint distribution, we should expect reciprocity for such models.

We perform experiments on one recommendation and two healthcare data sets. We find that reciprocity is relatively high across all experiments in expectation. In one of the experiments (in which each user contributes many data points), we find this to be true even for one realization.

\section{Preliminaries}

\subsection{Recommender Systems}
\label{sec:recsys}
One class of machine learning models that we study power recommender systems. We use this as a running example. 

We investigate a standard model~\cite{survey} of recommender systems. There is a population $U$ of individuals (indexed by $u$) and a set $I$ of items (pieces of content, indexed by $i$). When individuals interact with items, they \emph{rate} the item implicitly or explicitly. This produces a score $r_{ui}$. The individuals and the items may have features associated with them. These could be ids, or descriptive features such as the genre of the movie, the location of the individual etc. The task is to predict the scores for unseen interactions given scores for past interactions.

We seek to measure the value exchange from the implicit \emph{curation} that occurs when individuals consume recommendations. We do not measure the value arising from the \emph{creation} of content. 

\subsection{Machine Learning}
\label{sec:ml}

\subsubsection{General Setup}

There is a training data set $Z$ of labelled examples. The model is learned using this data and applied to a deployment set $Z'$. In our recommender system example, $Z$ consists of past interactions between individuals and items, and $Z'$ consists of (future) predictions. In practice, the machine learning model may be rebuilt over new training data sets; our model of machine learning represents a snapshot in time.

Every example $z=(x,y)$ consists of features $x$ that the machine learning model uses to learn patterns, and a label $y$; this is the prediction target or the response variable. 

For a recommender system, the features are characteristics of individuals and items, and the label is a rating $r_{ui}$ for the individual-item pair $u,i$ associated with the example.

\subsubsection{Model Architectures and Training Algorithms}


We study models trained using Gradient Descent. The model is parameterized by a weight vector $w \in \mathbb{R}^p$. This could be a simple linear/logistic model, or a deep-learning model of arbitrary architecture. The training process minimizes a loss function: $\ell: \mathbb{R}^p \times Z \rightarrow \mathbb{R}$; thus, the loss of a model on an example $z$ is given by $\ell(w, z)$. One commonly used loss function in the recommendation literature is squared error $(r-\hat{r})^2$. In (Stochastic) Gradient Descent, the weight parameters are updated iteratively, as the training process visits randomly selected batches of training examples. Suppose a batch $B_t$ is visited at step $t$, the parameters are updated in the direction opposite (we are minimizing loss) to the gradient: $\sum_{z \in B_t}  \eta_t \nabla \ell(w_t, z)$; here $\nabla \ell(\cdot, z)$ is the gradient of the loss function with respect to the weight parameters, $w_t$ is the value of the weight parameters at time $t$, and $\eta_t$ is the step-size at time $t$. 

If the data set is small, every batch can visit the entire data set $Z$, i.e. $B_t = Z$; this is called \emph{full-batch gradient descent}. \footnote{However, in large data sets, it is often computationally advantageous to process a subset of training examples at each step. Therefore the training process operates on randomly chosen, i.e., \emph{stochastic} batches of examples.}

\section{Measuring Reciprocity}
\label{sec:measuring}

We define measures of reciprocity. Our measure is parameterized by a technique that identifies the \emph{influence} of a training example on a  prediction example. Formally, an influence computation technique measures the impact of a specific training example $z$ (belonging to one individual)  on a prediction made by the model on a deployment example $z'$ (belonging to another individual). There are several ways to measure influence, and this affects the results; in this paper, we investigate a couple of methods.

\subsection{Marginal Influence}

Let $M_Z$ be the machine learning model trained on the data set $Z$; $M_Z(x)$ is the model's prediction on a data point $x$. The marginal influence of a training point $z$ on a prediction point $z'=(x', y')$ is based on the \emph{counterfactual} of removing the point $z$ from the training set:

\begin{align}
 \marginal(z, z') =  \ell(M_{Z \setminus \{z\}}(x'),y') - \ell(M_Z(x'),y')  
  \label{eq:marginal}
 \end{align}
 
We define all our influence measures so that they measure \emph{reduction in the loss} of the deployment example from the presence of a training example. Thus a positive quantity connotes that loss was reduced by that amount, and a negative quantity connotes that loss was increased by that magnitude.

Computing exact marginal influence often requires retraining the model on the modified data set. However, this can be approximated, without retraining, via certain Hessian approximations~\cite{inluence}.

\begin{remark} [Signal-to-noise ratio in Marginal]
\label{rem:marginal-noise}
\marginal{} relies on deleting a single individual's data and retraining the model to optimality. This may result in a substantially similar model, if the data set is large. This makes this measure of influence more susceptible to noise. This is confirmed in our experiments.
\end{remark}

\subsection{\trackin{} Influence}
\label{sec:influence}

The second influence measure we  use is called \trackin{}~\cite{tracin}. Whereas \marginal{} relies on a counterfactual approach, \trackin{} assigns contributions and benefits based on \emph{actual} work done during the training process. It is therefore reliant on the training process, and applies only to models trained using Stochastic Gradient Descent.

Here is the idea: First, suppose that SGD visits examples one at a time, i.e., the batch size ($B_t$) is one. Then, the visit (to the training example $z$) changes the model parameters, and this changes the model's loss on the prediction example $z'$. It is natural to attribute this change in loss to the contribution of example $z$ and the benefit of example $z'$

However, the model visits several training examples at once. Therefore, we have to disentangle the outflows of the examples $z \in B_t$. \trackin{} does this using dot products of gradients: $-\eta_t \nabla \ell(w_t, z) \cdot  \nabla \ell(w_t, z')$. The term $-\eta_t \nabla \ell(w_t, z)$ captures the movement in the weight parameters due to example $z$; this is by definition of gradient descent. And $\nabla \ell(w_t, z')$ models change in the loss of the prediction example $z'$ due to a change in the weight parameters. The influence of the training example $z$ on a prediction example $z'$ is computed by summing across all the batches in which the example is present:  

\begin{align}
 \trackin(z, z') = \sum_{t:\ z \in B_t}  \eta_t \nabla \ell(w_t, z') \cdot  \nabla \ell(w_t, z).
  \label{eq:tracin}
 \end{align}


\begin{remark} 
\label{rem:first-order}
The use of gradients entails a first-order approximation. The actual change in loss of the example $z'$, can be written as $\ell(w_{t+1}, z') = \ell(w_t, z') + \nabla \ell(w_t, z') \cdot (w_{t+1} - w_t)$, plus a higher order term of order $O(\eta_t^2)$, that we ignore. This approximation is reasonable when the step-sizes are small. We measure this discrepancy for our experiments (see Figure~\ref{fig:approximation}).
\end{remark}

\subsection{A Measure of Reciprocity}
\label{sec:measure}

We now define a measure of reciprocity. Let $Z_u$ and $Z'_u$ be the set of training and deployment examples belonging to individual $u$. Then, the influence of other individuals' ($v \neq u$) data on the predictions for individual $u$, i.e., the \emph{inflow} is: 
\begin{align}
\label{eq:inflow}
I_u = \sum_{z \in Z \setminus Z_u} \sum_{z' \in Z'_u} \influence(z, z').   
\end{align}

Here Influence can either be \trackin{} influence or \marginal{} influence. Similarly the \emph{outflow} (contribution) of individual $u$ towards other individuals' predictions is:
\begin{align}
\label{eq:outflow}
O_u = - \sum_{z' \in Z \setminus Z'_v} \sum_{z \in Z_u} \influence{}(z, z').  
\end{align}

As discussed in the introduction, inflow and outflow can be positive (an individual benefits or is benefited from participating in the system) or negative (an individual is harmed or harms others by participating in the system).

We say that a machine learning model is \textbf{$\alpha$-reciprocal} for a individual $u$ if the ratio of outflow to inflow $I_u/O_u$ is in the range $[\alpha, 1/\alpha]$ for $\alpha$ in the range $[0,1]$. If the signs of $I_u$ and $O_u$ do not match, we define reciprocity to be $0$.

We say that a  model is $(p,\alpha)$-reciprocal if it is $\alpha$-reciprocal for $p$ fraction of individuals. (Thus, a model is at best $(1,1)$ reciprocal and at worse $(0,0)$ reciprocal.) Depending on the measure of influence that we use, we will say $(p, \alpha)$-\trackin-reciprocal or $(p, \alpha)$-\marginal-reciprocal. Another measure we study is the correlation between the inflow and the outflow across individuals; this measure emphasizes individuals with large inflows and outflows.


\subsubsection{Computing \trackin{} Inflows and Outflows}
\label{rem:computation} 
The definition of Inflows and Outflows as a sum of terms $\trackin(z, z')$ may suggest that one needs to compute this matrix of pairwise \trackin{} influence, which would incur a $O(|Z||Z'|p)$ computational cost, where $p$ is the number of parameters of the model. This is not the case: since \trackin{} is a sum of dot products, one can rewrite the Outflow as follows.
\begin{align}
O_u 
&= -\sum_{z \in Z_u} \sum_{z' \in Z' \setminus Z'_u} \trackin(z, z') \notag\\
&= -\sum_{z \in Z_u} \sum_{z' \in Z' \setminus Z'_u} \sum_{t: z_t = z} \eta_t \nabla \ell(w_t, z')\cdot \nabla \ell(w_t, z) \notag\\
&= -\sum_{z \in Z_u} \sum_{t: z_t = z} \eta_t \nabla \ell(w_t, z) \cdot \sum_{z' \in Z' \setminus Z'_u} \nabla \ell(w_t, z') \notag\\
&= -\sum_{z \in Z_u} \sum_{t: z_t = z} \eta_t \nabla \ell(w_t, z) \cdot (\nabla \ell(w_t, Z') - \nabla \ell(w_t, Z'_u)), \label{eq:outflow_efficient}
\end{align}
where, in the last equality, we define
\begin{equation}
\label{eq:sum_gradients}
\nabla \ell(w_t, Z') = \sum_{z' \in Z'} \nabla \ell(w_t, z').
\end{equation}
Notice that this term does not depend on $u$. It can be computed once in $O(|Z'|p)$, and reused for computing $O_u$ for all $u$. The total complexity is thus reduced to $O(|Z|p + |Z'|p)$. A similar observation holds for inflows.


\subsection{Inflow and Outflow for Matrix Factorization}
\label{subsec:influence_mf}

We discuss the concepts of influence, outflow, inflow and reciprocity for a specific type of machine learning model based on \emph{matrix factorization}, in the context of the recommendation running example.

In such a model, every individual $u$ is endowed with a $d$ dimensional vector $p_u$ and every item $i$ is endowed with a $d$ dimensional vector $q_i$; we refer to these vectors as embeddings. The prediction matrix is the product $\hat R = PQ^\top$, where $p_u$ is the $u$-th row of matrix $P$ and similarly for $Q$. In other words, the predicted rating for user-item pair $(u, i)$ is given by the dot product $\hat r_{u,i} = p_u \cdot q_i$. See~\cite{matrix-factorization} for more details about matrix factorization.

The model is optimized for a regularized quadratic loss, i.e.,
\[
\frac{1}{2}\sum_{(u, i) \in Z} (p_u \cdot q_i - r_{ui})^2 + \frac{\lambda}{2} \left( \sum_u \|p_u\|^2 + \sum_i \|q_i\|^2 \right),
\]
where $r_{ui}$ is the label of pair $(u,i)$. The regularization term helps generalization; intuitively, it ensures that the model does not overfit on rare  users or rare items.

For the purpose of attribution, we decompose the regularization term as a sum over training examples, and define the loss as
\[
\frac{1}{2}\sum_{(u, i) \in Z} \left((p_u \cdot q_i - r_{ui})^2 + \frac{\lambda}{|Z_u|} \|p_u\|^2 + \frac{\lambda}{|Z_i|} \|q_i\|^2 \right),
\]
where $Z_u = \{i : (u,i) \in Z\}$ and $Z_i = \{u: (u, i) \in Z\}$.

Notice that the loss gradients and the \trackin{} influence have a simple structure. A visit to a training example $(u, i)$ only updates the vectors $p_u$ and $q_i$. Moreover, the change in the user embedding $p_u$ only affects the predictions for user $u$; therefore updates to the user vectors do not play a role in the definitions of inflow and outflow. (Equations~\ref{eq:inflow} and~\ref{eq:outflow}).

The update to the item embedding $q_i$ only influences users who interact with the item $i$ in the deployment set. Thus, for this loss function, inflows and outflows \emph{only flow through updates to item embeddings}.

\begin{remark}[Computing \trackin{} for Matrix factorization]
Section~\ref{rem:computation} suggests that we can compute \trackin{} in $O(|Z|p + |Z'|p)$ where $p$ is the total number of model parameters, in this case $p = d(|U| + |I|)$. But due to the structure of the problem, this computation can be done more efficiently for matrix factorization. Notice that the gradient of the loss w.r.t. a training example $z = (u, i)$ is $2d$-sparse, only the embeddings $p_u, q_i$ have a non-zero gradient. Thus, computing $\nabla \ell(w_t, Z)$ in Equation~\eqref{eq:sum_gradients}, can be done in $O(|Z|d)$ instead of $O(|Z|p)$. Similarly, computing the sum of dot products in Equation~\eqref{eq:outflow_efficient} requires $O(|Z'|d)$ operations (since each $\nabla \ell(w_t, z)$ is $2d$-sparse). The total complexity is therefore $O(|Z|d + |Z'|d)$. This is equal to the complexity of running gradient descent, which means that computing Inflows and Outflows along the SGD trajectory does not significantly increase the computational cost of model training.
\end{remark}


\section{Theoretical Results}
\label{sec:theory}

\begin{proposition}
\label{thm:main}
Consider a model trained with Stochastic Gradient Descent for $T$ steps. Suppose that the training and deployment sets satisfy the assumption of Remark~\ref{rem:dist}, i.e. that they are both drawn IID from the same distribution. Furthermore, suppose that batches of training data $(B_t)_{t \in \{1, \dots, T\}}$ are mutually independent. Then the model is $(1,1)$-\trackin-reciprocal in expectation, in the sense that for all individuals $u$, $\Exp[I_u] = \Exp[O_u]$.
\end{proposition}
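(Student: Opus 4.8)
The plan is to unwind both sides using the definition of \trackin{} in Equation~\eqref{eq:tracin}, swap the order of summation so that the outermost sum runs over training steps $t$, and reduce the claim by linearity of expectation to a single per-step identity. Since an example never visited by SGD contributes $0$ to every $\trackin(z,z')$, I would identify the training set with $\bigcup_{t}B_t$ and split each batch $B_t$ into its examples belonging to $u$ and those belonging to other individuals. Taking the inflow as $\sum_{z\in Z\setminus Z_u}\sum_{z'\in Z'_u}\trackin(z,z')$ (cf.\ Equation~\eqref{eq:inflow}) and the outflow as $\sum_{z\in Z_u}\sum_{z'\in Z'\setminus Z'_u}\trackin(z,z')$, the step-$t$ contribution to the inflow is $\eta_t\langle a_t,b_t\rangle$ with $a_t$ the sum of loss gradients $\nabla\ell(w_t,\cdot)$ over the examples of $B_t$ \emph{not} belonging to $u$ and $b_t$ the sum over the deployment examples in $Z'_u$; symmetrically, the step-$t$ contribution to the outflow is $\eta_t\langle a'_t,b'_t\rangle$ with $a'_t$ summing gradients over the examples of $B_t$ belonging to $u$ and $b'_t$ over the deployment examples in $Z'\setminus Z'_u$. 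It then suffices to prove $\Exp[\langle a_t,b_t\rangle]=\Exp[\langle a'_t,b'_t\rangle]$ for each fixed $t$.

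Next I would condition on the iterate $w_t$. Because $w_t$ is a deterministic function of $B_1,\dots,B_{t-1}$ and the batches are mutually independent, $w_t$ is independent of $B_t$; and because the deployment set is drawn independently of the training data (Remark~\ref{rem:dist}), $Z'$ is independent of $(w_t,B_t)$. Hence, conditionally on $w_t$, the batch $B_t$ and the deployment set $Z'$ are independent, and each is an i.i.d.\ sample from the common distribution $\mathcal D$ over $U\times X\times Y$. Write $\mu_u(w):=\Exp_{z\sim\mathcal D}[\ind{z\text{ belongs to }u}\,\nabla\ell(w,z)]$ and $\mu_{-u}(w)$ for the analogous quantity on the complementary event. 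Then i.i.d.-ness gives $\Exp[a_t\mid w_t]=|B_t|\,\mu_{-u}(w_t)$ and $\Exp[a'_t\mid w_t]=|B_t|\,\mu_u(w_t)$, while $\Exp[b_t\mid w_t]=|Z'|\,\mu_u(w_t)$ and $\Exp[b'_t\mid w_t]=|Z'|\,\mu_{-u}(w_t)$; and since $a_t\perp b_t$ and $a'_t\perp b'_t$ given $w_t$, the inner products factor:
\[
\Exp[\langle a_t,b_t\rangle\mid w_t]=|B_t|\,|Z'|\,\langle\mu_{-u}(w_t),\,\mu_u(w_t)\rangle,\qquad \Exp[\langle a'_t,b'_t\rangle\mid w_t]=|B_t|\,|Z'|\,\langle\mu_u(w_t),\,\mu_{-u}(w_t)\rangle.
\]

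Finishing is then immediate: the dot product is symmetric, so the two conditional expectations agree pointwise in $w_t$; taking expectations over $w_t$, multiplying by $\eta_t$, and summing over $t=1,\dots,T$ yields $\Exp[I_u]=\Exp[O_u]$. The part I expect to carry the real weight is the conditional-independence bookkeeping in the middle paragraph, as that is the only place the hypotheses enter: mutual independence of batches is what gives $B_t\perp w_t$, and the assumption that training and deployment come from the \emph{same} distribution is what makes the per-example expected gradient $\mu_u(\cdot)$ literally the same object for a batch draw and for a deployment draw --- which is exactly why $\Exp[a_t\mid w_t]$ is parallel to $\Exp[b'_t\mid w_t]$ and $\Exp[a'_t\mid w_t]$ is parallel to $\Exp[b_t\mid w_t]$. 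Minor loose ends, none affecting the argument: integrability (assume e.g.\ $\Exp\|\nabla\ell(w_t,z)\|^2<\infty$ so the factorizations are valid) and random batch composition (replace $|B_t|$ by its mean). If instead batches were sampled without replacement from a fixed finite training set, $B_t$ and $w_t$ would be dependent through their common source, an extra average over the training set would be required, and the exact identity could fail --- presumably why mutual independence of batches is assumed.
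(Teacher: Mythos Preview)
Your argument is correct and is essentially the same as the paper's: you rewrite $I_u$ and $O_u$ as sums over steps of dot products of two gradient-sums (one over the batch restricted to $\neq u$ resp.\ $=u$, the other over the deployment set restricted to $=u$ resp.\ $\neq u$), condition on the past so that $w_t$ is fixed while $B_t$ and $Z'$ remain independent i.i.d.\ draws from the same distribution, factor the expectation of each dot product into $|B_t|\,|Z'|$ times $\langle \mu_{-u}(w_t),\mu_u(w_t)\rangle$, and conclude by symmetry of the inner product. The only cosmetic difference is that the paper conditions on the filtration $F_{t-1}=\sigma(B_1,\dots,B_{t-1})$ whereas you condition on the coarser $\sigma(w_t)$; both suffice since $w_t$ is $F_{t-1}$-measurable and $B_t,Z'$ are independent of $F_{t-1}$.
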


The proof will formalize the following intuitive argument: given a pair of users $(u, v)$, the \trackin{} influence at time $t$ of $z \in Z_u$ on $z' \in Z'_v$ is $\eta_t\nabla\ell(w_t, z')\cdot\nabla\ell(w_t, z)$, which is symmetric in $z, z'$. Conditioned on the model parameters at time step $t$, training and deployment examples have the same distribution, so the expected influence of $u$ on $v$ at time $t$ is the same as the expected influence of $v$ on $u$. Summing over $t$ and over users $v \neq u$ concludes the argument.
\begin{proof}
\def\zcal{\mathcal{Z}}
First, we introduce some notation. Let $\zcal = U \times X \times Y$, where $U$ is the population of individuals, $X$ is the feature set and $Y$ is the label set. Let $D$ be the joint distribution over $\zcal$. For an individual $u \in U$, we write $\zcal_u = \{u\} \times X \times Y$, so that a training example $z$ belongs to individual $u$ if $z \in \zcal_u$.

Now, given batches of training data $B_1, \dots, B_T$ and a deployment set $Z'$ (note that both are random variables), we rewrite inflows and outflows in a form that is more amenable to taking expectations.
\begin{align}
I_u
&= \sum_{t = 1}^T \eta_t \sum_{z \in B_t : z \notin \zcal_u} \sum_{z' \in Z'_u} \nabla\ell(w_t, z) \cdot \nabla \ell(w_t, z') \notag\\
&= \sum_{t = 1}^T \eta_t \left(\sum_{z \in B_t} \nabla\ell(w_t, z) \ind{z \notin \zcal_u}\right) \cdot \left(\sum_{z' \in Z'} \nabla \ell(w_t, z') \ind{z' \in \zcal_u}\right), \label{eq:proof_in}
\end{align}
where $\ind{z' \in \zcal_u}$ is the indicator of the event ``$z'$ belongs to user $u$''. Similarly, we have for outflows
\begin{equation}
O_u = \sum_{t = 1}^T \eta_t \left(\sum_{z \in B_t} \nabla \ell(w_t, z_t) \ind{z_t \in \zcal_u}\right) \cdot \left(\sum_{z' \in Z'} \nabla \ell(w_t, z') \ind{z' \notin \zcal_u} \right).
\label{eq:proof_out}
\end{equation}

Let $(F_t)$ denote the filtration arising from the sequence of random variables $B_1, \dots, B_t$. Taking the expectation of inflow in Equation~\eqref{eq:proof_in}, and using the tower property of conditional expectations, we have
\[
\Exp[I_u] = \Exp\left[ \sum_{t = 1}^T \eta_t \Exp\left[\sum_{z \in B_t} \nabla\ell(w_t, z) \ind{z \notin \zcal_u} \cdot \sum_{z' \in Z'} \nabla \ell(w_t, z') \ind{z' \in \zcal_u} \Big| F_{t-1} \right]\right].
\]
Now, notice that the batches $(B_1, \dots, B_{t-1})$ completely determine the model parameters $w_t$ (since $w_t = w_0 - \sum_{\tau = 0}^{t-1} \eta_\tau \sum_{z\in B_\tau}\nabla \ell(w_\tau, z)$), and by assumption, the next batch of training examples $B_t$ is independent of previous batches, and so is the deployment set $Z'$. So conditioned on $F_{t-1}$, the two random variables $\sum_{z \in B_t}\nabla\ell(w_t, z)\ind{z \notin \zcal_u}$ and $\sum_{z' \in Z'}\nabla \ell(w_t, z')\ind{z' \in \zcal_u}$ are independent. Let us denote by
\[
g_{t-1}^{-u} = \Exp_{z \sim D}[\nabla \ell(w_t, z) \ind{z \notin \zcal_u}|F_{t-1}], \quad g_{t-1}^{u} = \Exp_{z' \sim D}[\nabla \ell(w_t, z') \ind{z' \in \zcal_u}|F_{t-1}].
\]
Then, by the aforementioned independence, linearity of expectations, and the assumption that elements of $Z'$ and $B_t$ follow the same distribution $D$, we have
\[
\Exp[I_u] = \Exp\left[ \sum_{t = 1}^T \eta_t (|B_t|g^{-u}_{t-1}) \cdot (|Z'|g^u_{t-1})\right].
\]
We make a similar calculation for outflows (the only difference is in the indicators): taking expectations in Equation~\eqref{eq:proof_out},
\begin{align*}
\Exp[O_u] &= \Exp\left[ \sum_{t = 1}^T \eta_t \Exp\left[ \sum_{z \in B_t} \nabla \ell(w_t, z) \ind{z \in \zcal_u} \cdot \sum_{z' \in Z'} \nabla \ell(w_t, z') \ind{z' \notin \zcal_u} \Big| F_{t-1} \right]\right],\\
&= \Exp\left[ \sum_{t = 1}^T \eta_t (|B_t|g_{t-1}^u) \cdot(|Z'|g_{t-1}^{-u})\right],
\end{align*}
where we used independence (conditional on $F_{t-1}$) of the random variables $\sum_{z \in B_t} \nabla \ell(w_t, z) \ind{z \in \zcal_u}$ and $\sum_{z' \in Z'} \nabla \ell(w_t, z') \ind{z' \notin \zcal_u}$. The last expression is equal to $\Exp[I_u]$. This concludes the proof.
\end{proof}

\begin{remark}
The theorem assumes that at each step of gradient descent, a new set of independent examples is drawn, which precludes revisiting the same example multiple times. In practice, training examples are revisited, which breaks independence. But notice that for the result to hold, the proof only needs that future samples $B_t$ be independent of the past trajectory $w_0, \dots, w_t$. In some regimes, this may be a reasonable approximation. For example, when the batch size is very large, there is little variance in the gradients, and one can informally treat the trajectory $w_0, \dots, w_t$ as being deterministic, unaffected by random sampling of training data. Another regime is when the data set is very large, and training only requires a very small number of passes over the training data. In such cases, independence may be a reasonable approximation. Our experiments suggest that reciprocity may be preserved to a large extent even when the independence assumption is broken.
\end{remark}

The theorem suggests that reciprocity holds whether the inflows and outflows are positive or negative; indeed an individual's data could hurt another's predictions if their characteristics are very different. (In our model, negative outflows and inflows would manifest as negative dot-products $\nabla \ell(w_t, z_u) \cdot \nabla \ell(w_t, z_v)$, what helps one individual hurts the other.) It holds irrespective of the variation in inflows and outflows across individuals; indeed some individuals may contribute more data (the set $Z_u$ is large) and others, very little (the set $Z_u$ is small).

\begin{remark}
\label{rem:clipping}
Reciprocity may not hold if the training process normalizes or clips (rescales) gradients. This can break reciprocity because the modification affects the gradient update, but it does not affect the gradient of the deployment loss. This breaks the symmetry in Equation~\eqref{eq:tracin}. For example, clipping is used to enforce differential privacy~\cite{dpsgd}. Its purpose is to control the contribution of any individual to the model, it is therefore expected that reciprocity is broken. Clipping and normalization are also used for stabilizing training of deep neural networks~\cite{clipping}. This may also affect reciprocity.
\end{remark}

\section{Experiments}
\label{sec:empirical}

We now perform experiments on a recommender system data set and two health data sets. For each experiment, we randomly partition the data set into training and deployment sets. Measurements are averaged across several such random splits. Models are trained using many iterations of gradient descent. This experimental setup relaxes the assumption of Theorem~\ref{thm:main}. The random partitioning mimics the assumption that the train and deployment sets are identically distributed. However, the independence assumption does not hold since training examples are revisited multiple times.

We find that \trackin{}-reciprocity is relatively high across all experiments in expectation, i.e. when averaged across splits. For MovieLens, reciprocity is high even for a single split.

\subsection{Recommender Systems: MovieLens}
We conduct experiments on MovieLens Data~\cite{MovieLens}, specifically, the MovieLens 100K data set with 943 individuals, 1682 items (movies), and 100,000 ratings, i.e., an average of about 106 ratings per individual.  Each individual has at least 20 ratings. Each movie is rated on a scale from 0-5. We randomly split the ratings into training and deployment sets in the ratio 80:20.


We train a matrix factorization model using Gradient Descent. We randomly initialize the user and item embeddings. For our experiments, we use a fixed embedding dimension $d=16$. 

Given the relatively small size of the training data, we will use full-batch Gradient Descent, i.e., every example is visited at every time step.

We use the following hyper-parameters, which we tuned on a random split of the data: a regularization coefficient $\lambda = 1$, a number of steps $T = 1000$, and a learning rate $\eta = 0.0002$. With these parameters, we train the model on ten different random splits, each repeated ten times (to average across random initializations). The average test root mean squared error (RMSE) is $0.925$ with a standard deviation of $0.009$.\footnote{Matrix factorization on MovieLens 100K is used as baseline in many works on recommender systems. As a sanity check, we compare to such works. The best reported RMSE we could find for matrix factorization is 0.911~\cite{autosvd++,attribute} using a 90-10 split. Our model (with the same hyper-parameters), has an RMSE of 0.910 on the 90-10 split.}

\subsubsection{Marginal Influence is Noisy}

We now study reciprocity based on \marginal{} influence on the MovieLens data set. We find that the resulting measures of outflow and inflow have a low signal-to-noise ratio (SNR), defined as the mean divided by the standard deviation, both computed across different random splits. (Recall Remark~\ref{rem:marginal-noise}).

\begin{figure}
    \centering
    \subfloat[\centering \marginal{}]{\includegraphics[width=0.45\textwidth]{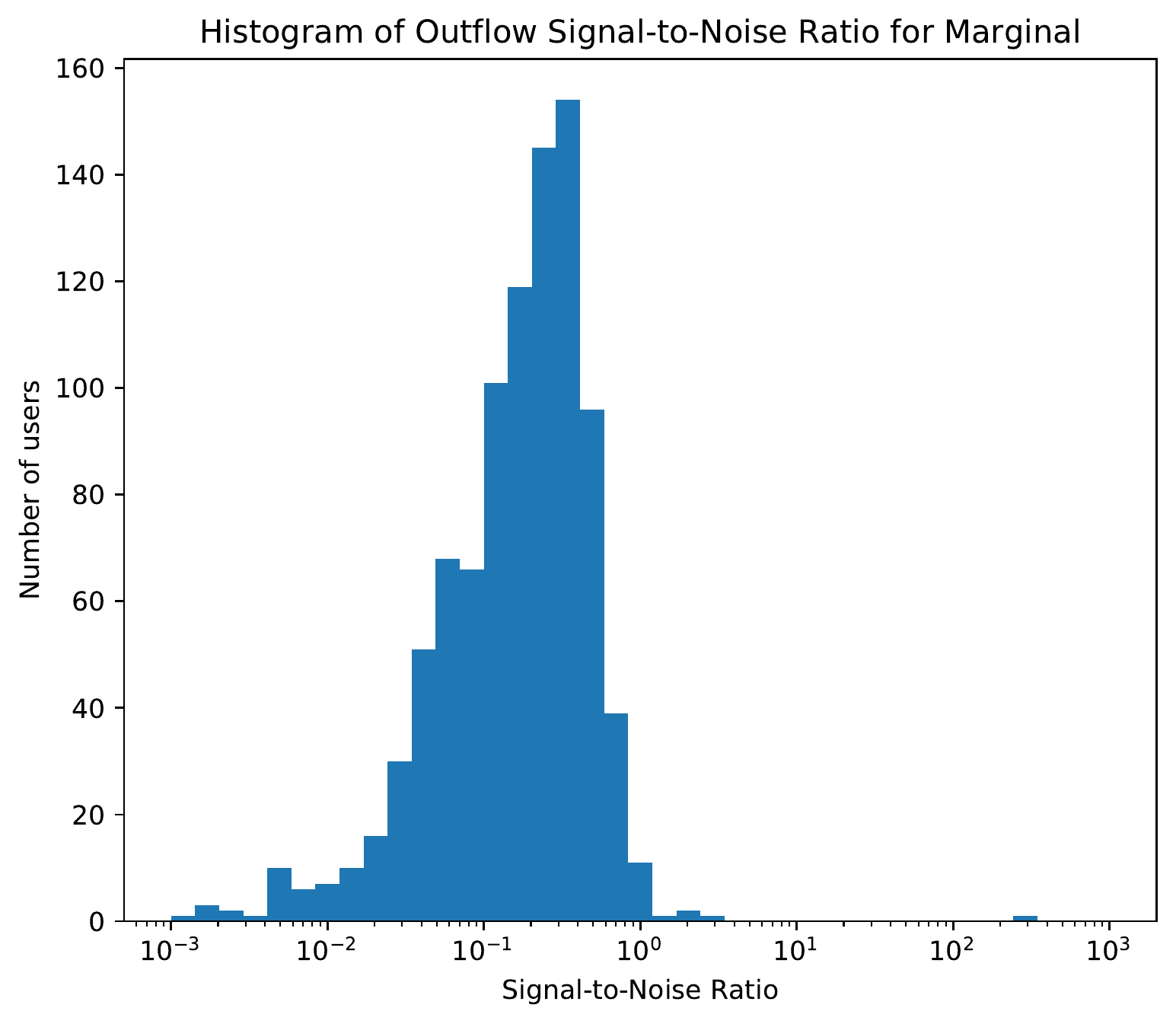}}\quad %
    \subfloat[\centering \trackin{}]{\includegraphics[width=0.45\textwidth]{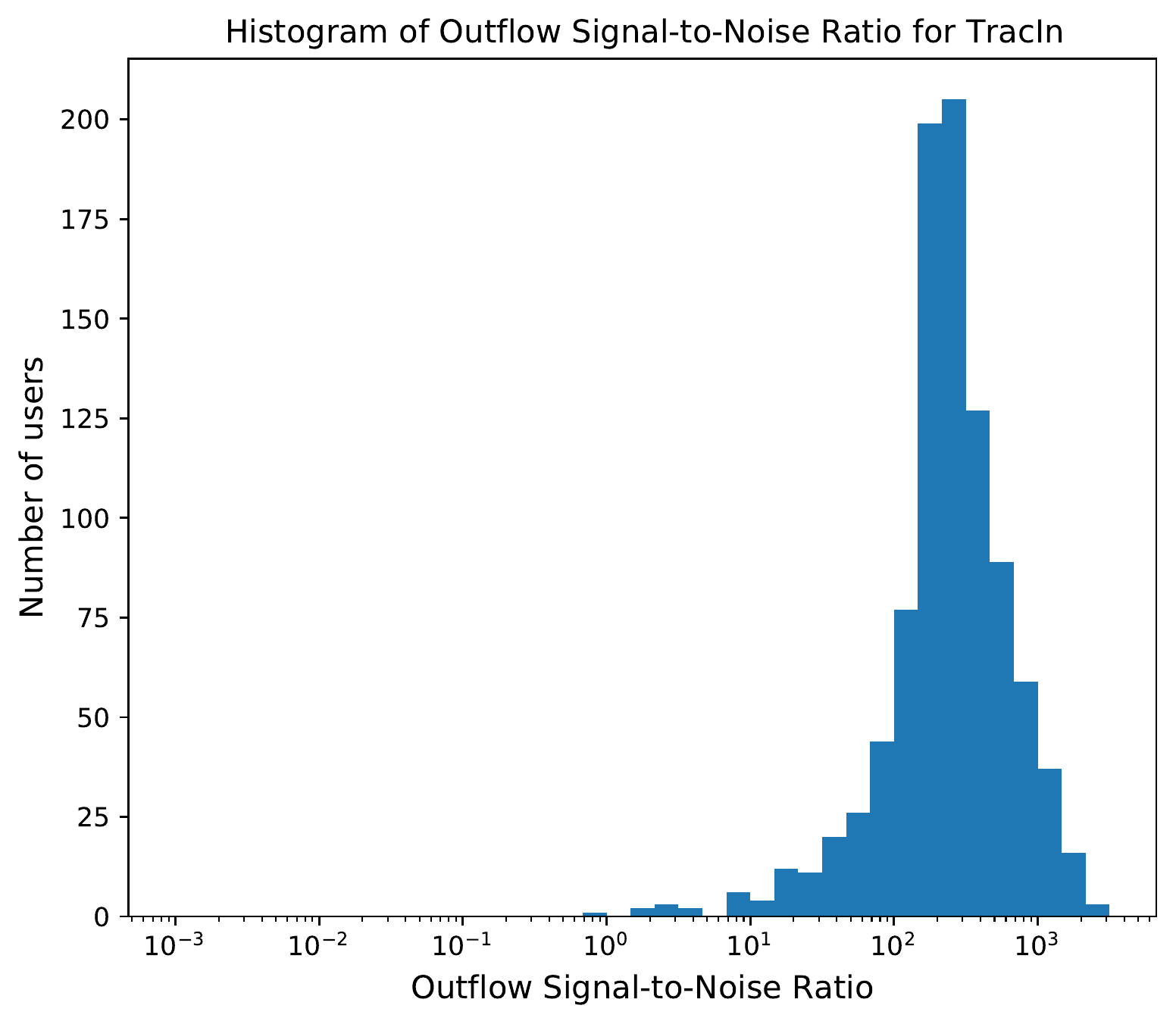}}
    \caption{Histogram of user outflow signal-to-noise ratios, computed across ten splits.}
    \label{fig:noise}
\end{figure}

Figure~\ref{fig:noise} shows the distribution of the signal-to-noise ratio of outflows across individuals, both when using \marginal{} and \trackin{} as measures of influence. For \marginal, almost all individuals have a ratio less than one, and 90\% of individuals have a ratio less than $0.49$. For \trackin, 90\% of individuals  have a ratio greater than $76$. A similar observation holds for inflows.

Due to its low signal-to-noise ratio, we do not present reciprocity results using \marginal.

\subsubsection{Reciprocity using \trackin{} Influence}

We start with some sanity checks. We have already seen that \trackin{} has a high signal-to-noise ratio across splits. Another potential source of measurement error is the first-order approximation of \trackin{} (recall Remark~\ref{rem:first-order}). We measure this discrepancy, see Figure~\ref{fig:approximation}. We find that the percentage relative discrepancy remains relatively small; its 80th percentile (across gradient descent steps) is 1.1\%.

\begin{figure}
\centering
\subfloat[\centering Single split]{\includegraphics[width=0.48\textwidth]{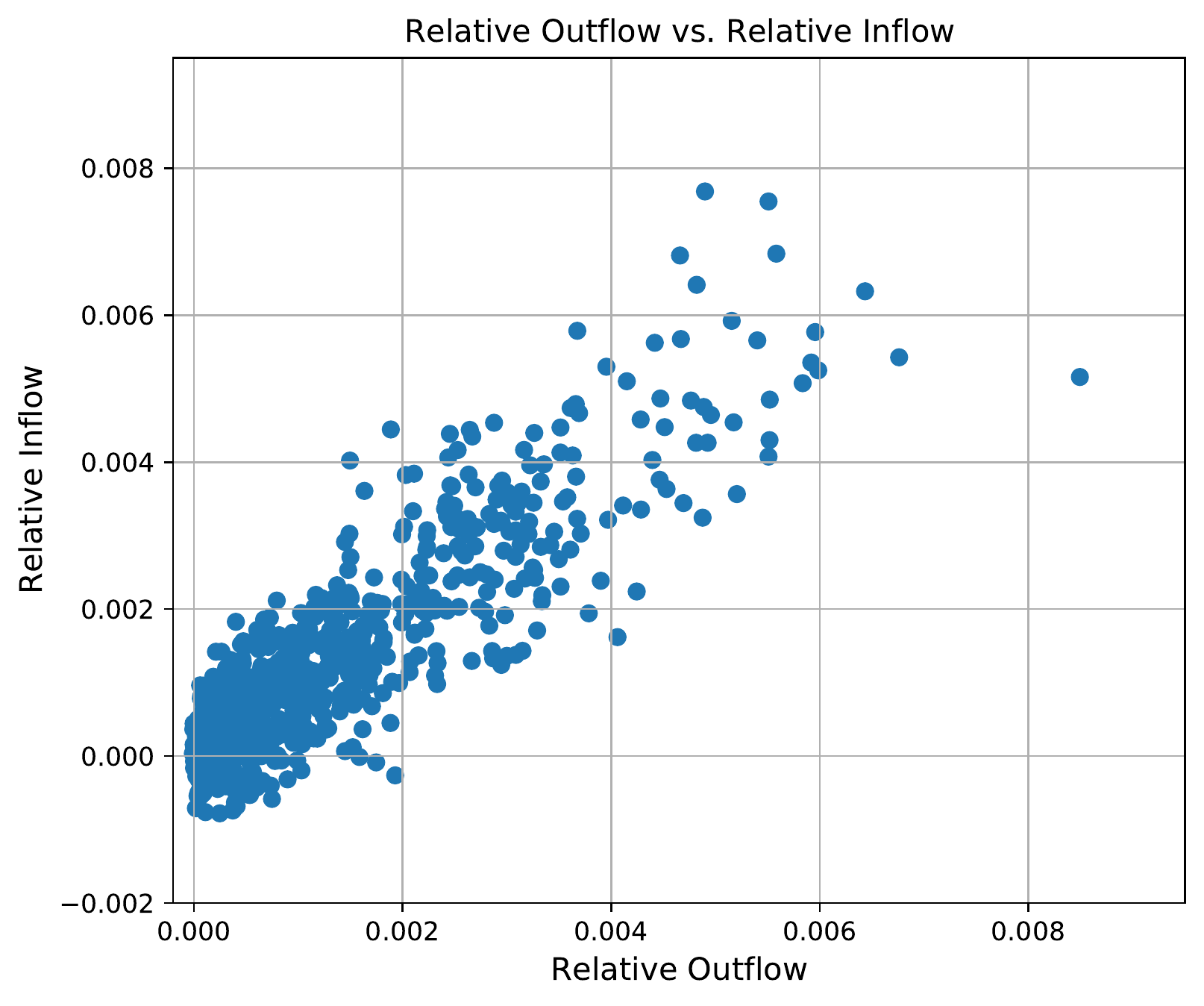}}\quad %
\subfloat[\centering Ten splits]{\includegraphics[width=0.48\textwidth]{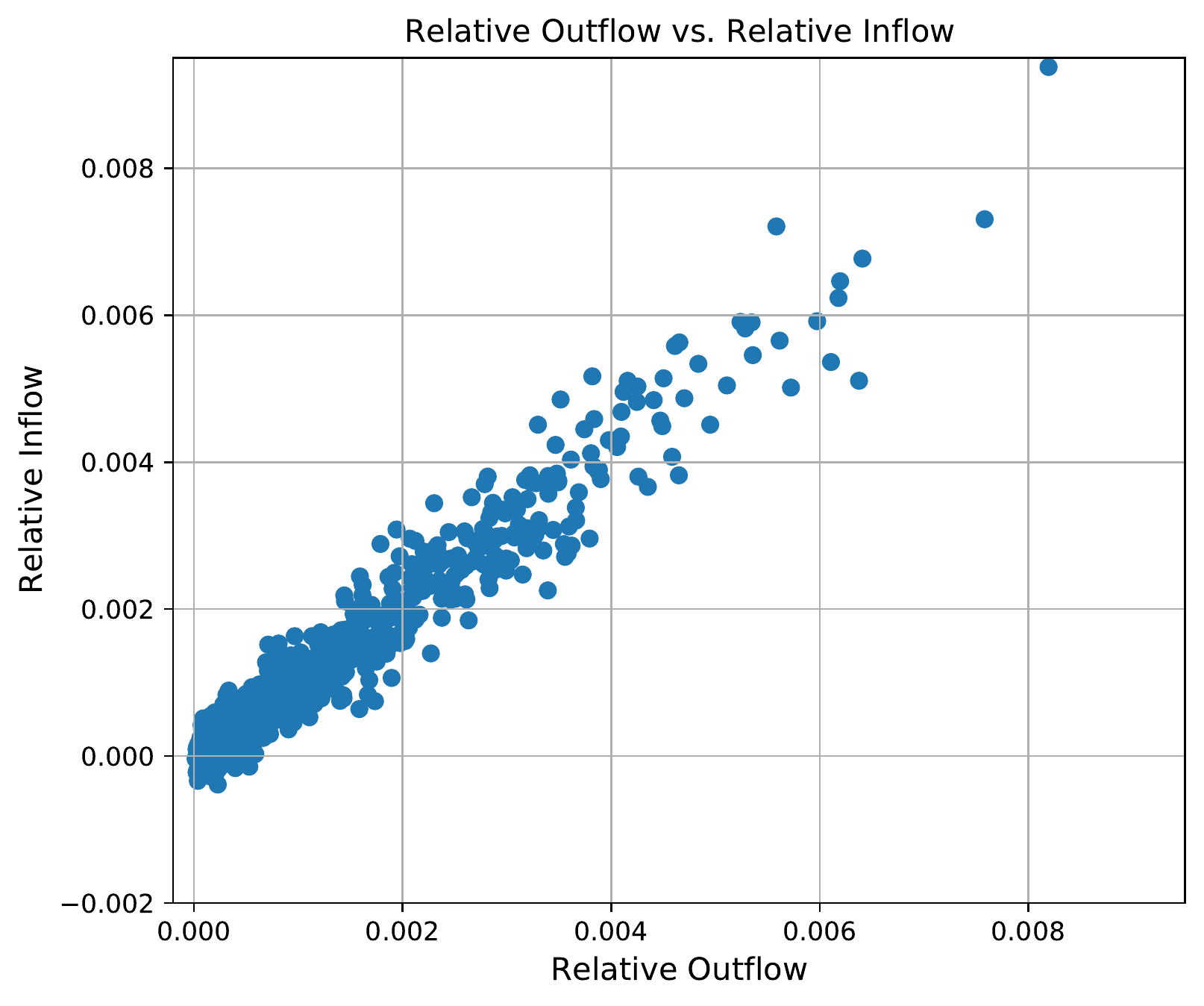}}
\caption{Average inflow vs. outflow for all individuals. The average is over ten training runs for a single split (left), and over all runs for all splits (right).}
\label{fig:inflow_outflow}
\end{figure}

Next, we compute average outflows and inflows and inspect them across all individuals. We run the experiment on 10 different splits, and on each split we average the measurements across 10 random initializations. See Figure~\ref{fig:inflow_outflow}. We notice that inflow is largely commensurate to outflow. 

\begin{figure}
\centering
\subfloat[\centering Single split]{\includegraphics[width=0.48\textwidth]{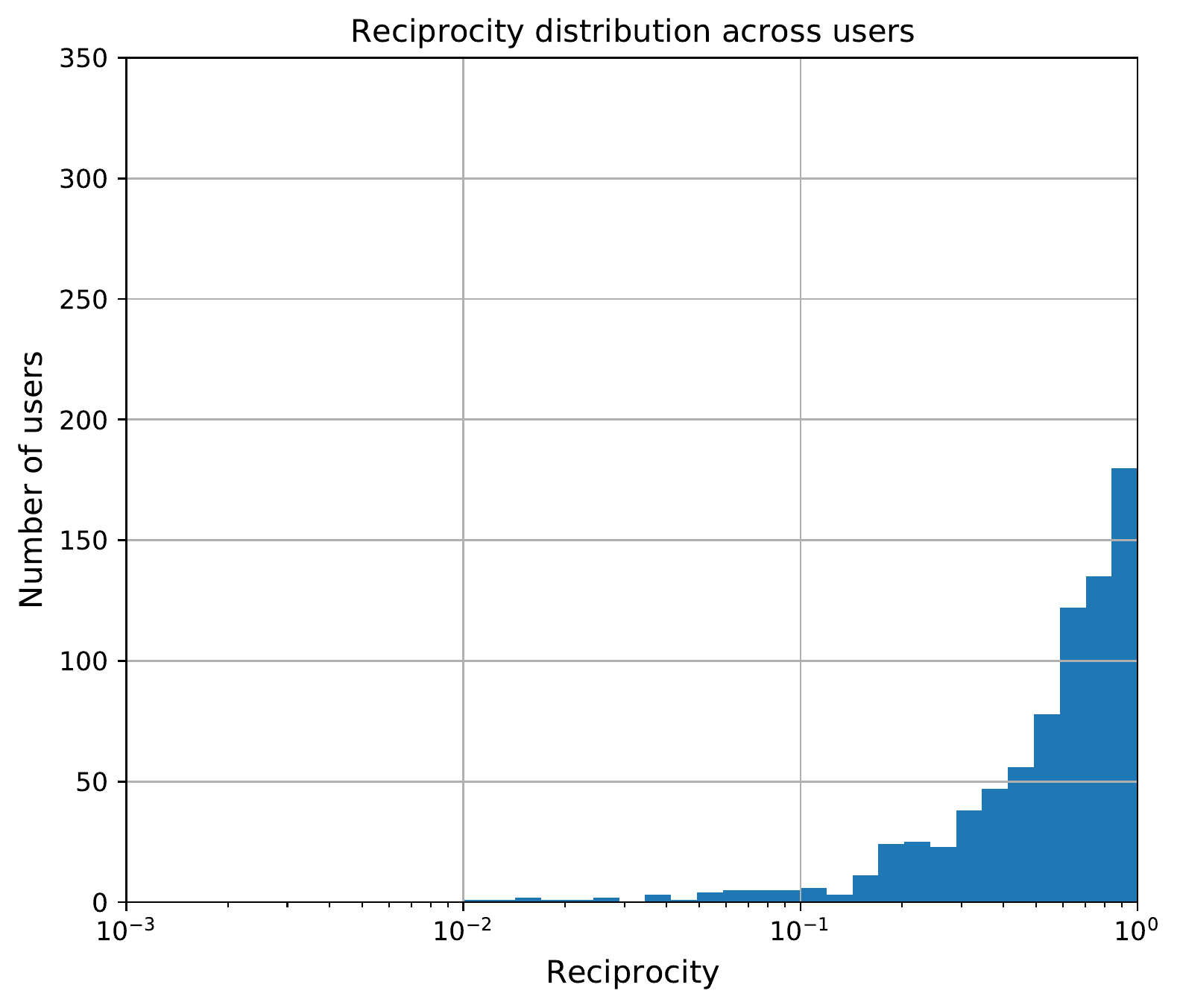}}\quad %
\subfloat[\centering Ten splits]{\includegraphics[width=0.48\textwidth]{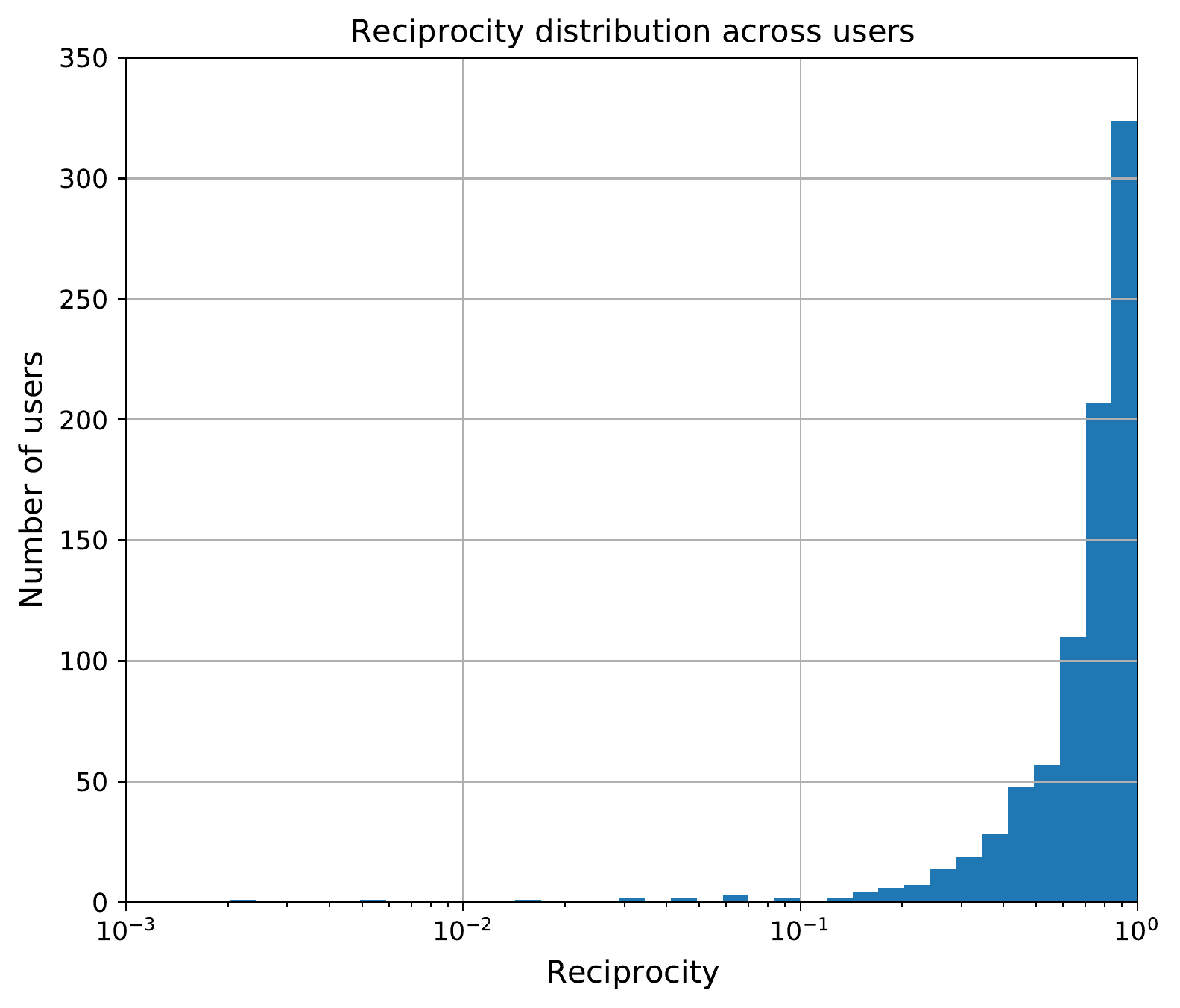}}
\caption{Histogram of Reciprocity across individuals, computed on a single split (left) and across ten splits (right).}
\label{fig:hist}
\end{figure}


For a single split, we find that 75\% of the individuals have reciprocities in $[0.2, 1]$, i.e. the model is $(0.75, 0.2)$-\trackin-reciprocal. We find that the correlation between inflow and outflow is $0.89$.

When averaging across ten splits, 75\% of the individuals have reciprocities in $[0.48, 1]$, i.e. the model is $(0.75, 0.48)$-\trackin-reciprocal. We find that the correlation between inflow and outflow is $0.97$.

Averaging across splits can be viewed as approximating the expectation of reciprocity, and this is in line with the theoretical results. The fact that we measure some reciprocity even on a single split suggests that the result holds even on one realization, and not just in expectation, possibly because each individual contributes many data points in this data set.

We also observe that inflow and outflow are largely non-negative. In a single split, 0.5\% of individuals have a negative inflow and 16.9\% of individuals have a negative outflow. When averaging across all splits, all individuals have a positive inflow, and 11.1\% of individuals have a negative outflow; on average, the presence of these individuals degrades the prediction quality of other individuals. However, these individuals have small magnitudes of outflow and inflow; this also explains the difference between the correlation measure (which is dominated by individuals with large inflow and outflow) and the $(p,\alpha)$ measure.

\subsection{Healthcare Data Sets}
\label{sec:health}

We investigate reciprocity in two healthcare machine learning data sets. The first is a data set from~\cite{diabetes} about predicting diabetes. It has ten features: age, sex, body mass index, average blood pressure, and six blood serum measurements, and the task is to predict disease progression one year after the time of the readings. The data is from 442 individuals. The second is a data set from the UCI Machine Learning Repository about predicting breast-cancer. There are thirty features that relate to geometric properties of cell nuclei from a digitized image of a fine needle aspirate (FNA) of a breast mass. The task is to predict whether the breast cancer is present or not (malignant or benign). The data is from 569 individuals.

In both data sets, each individual corresponds to a single data point, unlike the Movielens data set where individuals correspond to at least 20 data points. Consequently, every individual belongs to exactly one of the training or deployment sets; individuals in the training set only have outflows and individuals in the deployment set only have inflows. In this case, measuring reciprocity is only possible in expectation over random splits into the training or deployment sets. We mimic this by averaging measurements over 100 random splits of the data (see Remark~\ref{rem:dist}).


On the diabetes prediction task, we train a linear regression model optimized for the mean squared error, with a number of steps $T = 200$, and a learning rate $\eta = 0.01$. On the breast cancer classification task, we train a logistic regression model with a number of steps $T = 600$ and a learning rate $\eta = 0.1$. In both cases, some features have very different scales, so we found it important to normalize all features (using mean and variance computed on the training set).

To have a reasonable baseline at initialization, we initialize the model parameters in such a way that it predicts the average training label (by setting the bias term to a constant and other parameters to zero).

\subsubsection{Reciprocity using \marginal{} Influence}
First, we measure \marginal{} reciprocity. Recall from Remark~\ref{rem:marginal-noise} that this method is susceptible to noise. We find that the results are not as noisy as in the MovieLens experiment, possibly because these data sets are smaller; the 90-th percentile of signal-to-noise ratio is 0.55 for the diabetes model and 0.58 for the breast cancer model, compared to 0.49 for MovieLens.

Despite the noise, we report the results so that we can compare to \trackin{}. See Figure~\ref{fig:health_marginal}. We find that the diabetes model is $(0.75, 0.36)$-\marginal-reciprocal, and the breast cancer model is $(0.75, 0.48)$-\marginal-reciprocal.

\begin{figure}
\centering
\subfloat[Diabetes]{\includegraphics[width=0.48\textwidth]{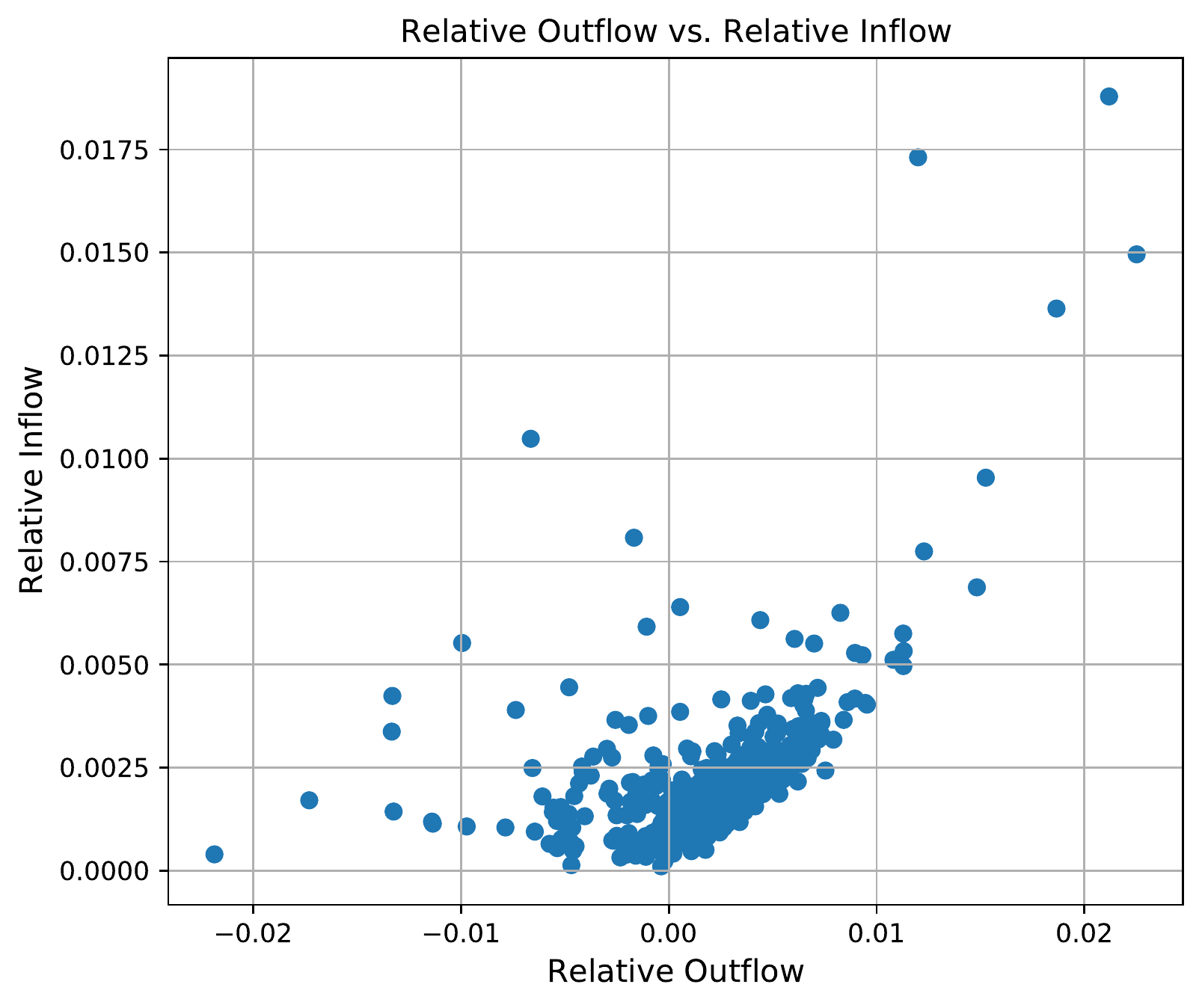}}\quad %
\subfloat[Breast cancer]{\includegraphics[width=0.48\textwidth]{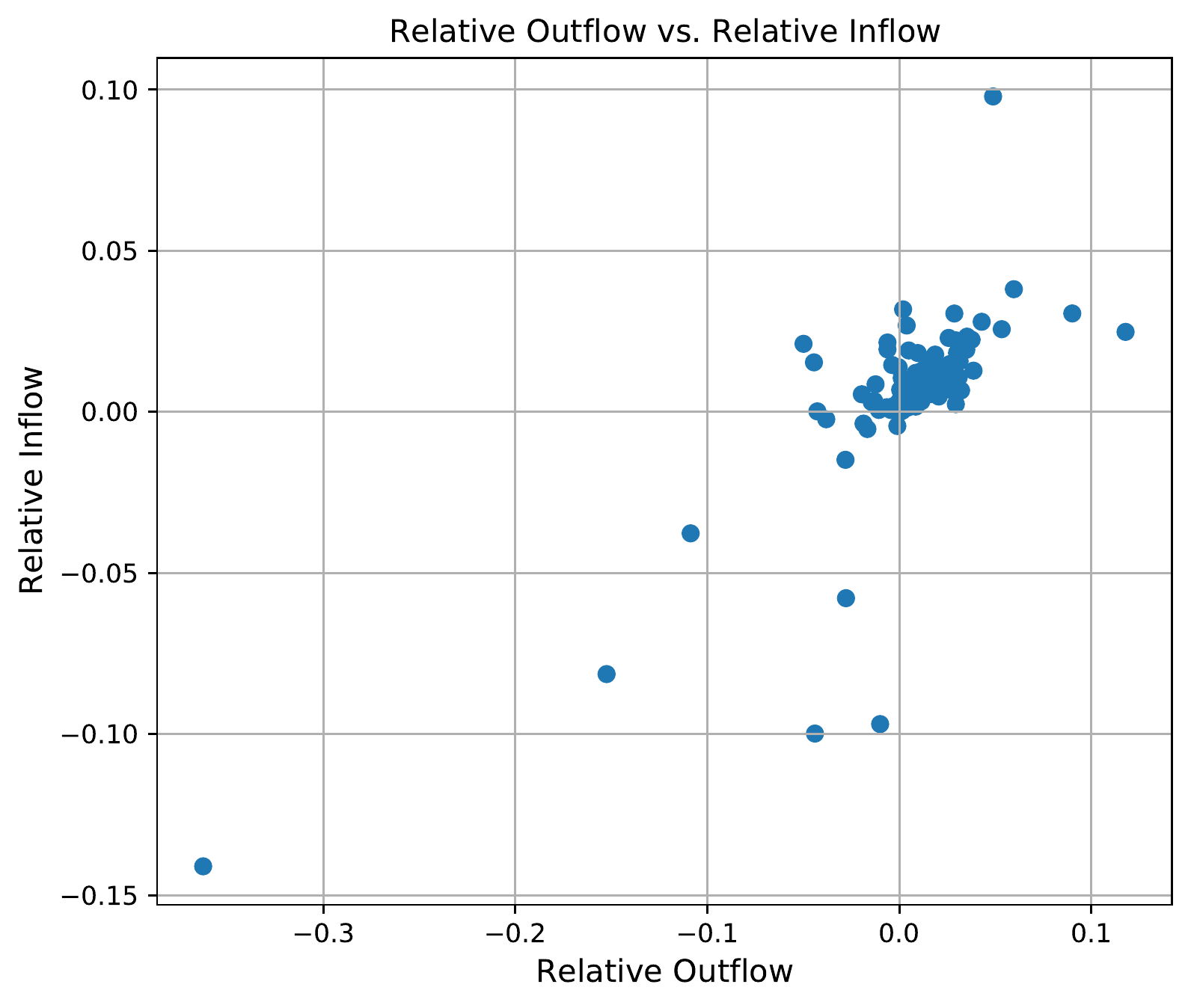}}
\caption{Average \marginal-inflow vs. \marginal-outflow for all individuals. }
\label{fig:health_marginal}
\end{figure}

\subsubsection{Reciprocity using \trackin{} Influence}

Next, we compute outflows and inflows using \trackin{} as a measure of influence. See Figure~\ref{fig:reciprocity_scatter}.

The diabetes model is $(0.75, 0.76)$-\trackin{}-reciprocal, and the breast cancer model is $(0.75, 0.93)$-\trackin{}-reciprocal. Both are higher than \marginal-reciprocities.

We notice that for the second model, one individual has a very large negative inflow and outflow. This data point appears to be particularly hard to classify, as it has the largest average deployment loss (more than twice as large as the next largest loss). This large loss induces a large gradient which translates to a large negative inflow and outflow.

We also observe that, as the training proceeds, inflows and outflows become larger in magnitude, even though the model parameters may not substantially change. Figure~\ref{fig:reciprocity_scatter_2} shows one such example. When the diabetes model is trained for 1000 steps, there is a slight increase in deployment loss (about 1\%), but a more significant effect on reciprocity: the 75th percentile of reciprocity decreases from $0.76$ to $0.20$. This drop may be due to large credits (inflows and outflows) being assigned because individual gradients are large, even though in aggregate, the gradient is small, hence changes in model parameters are small.

\begin{figure}
\centering
\subfloat[Diabetes]{\includegraphics[width=0.48\textwidth]{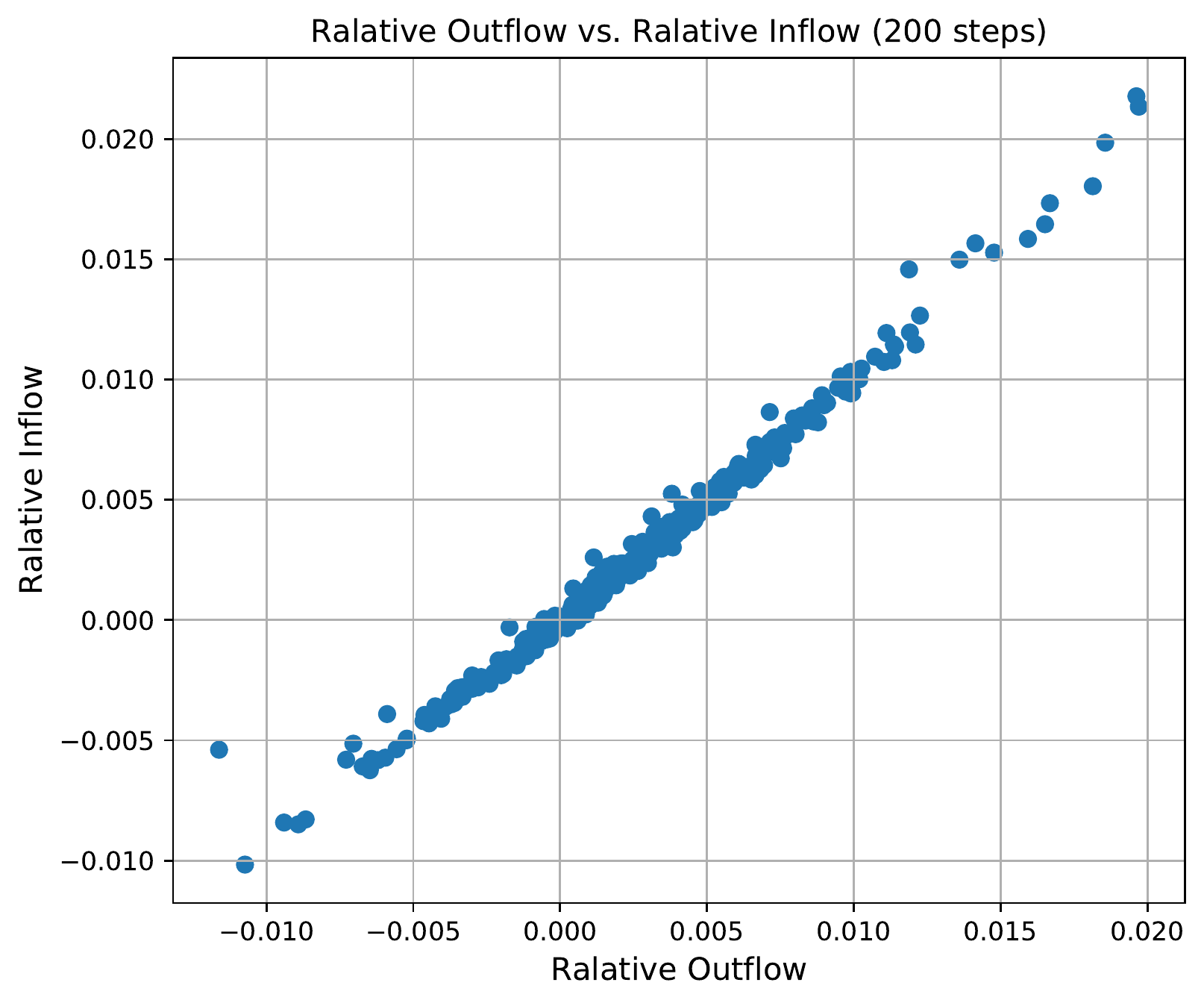}}\quad %
\subfloat[Breast cancer]{\includegraphics[width=0.48\textwidth]{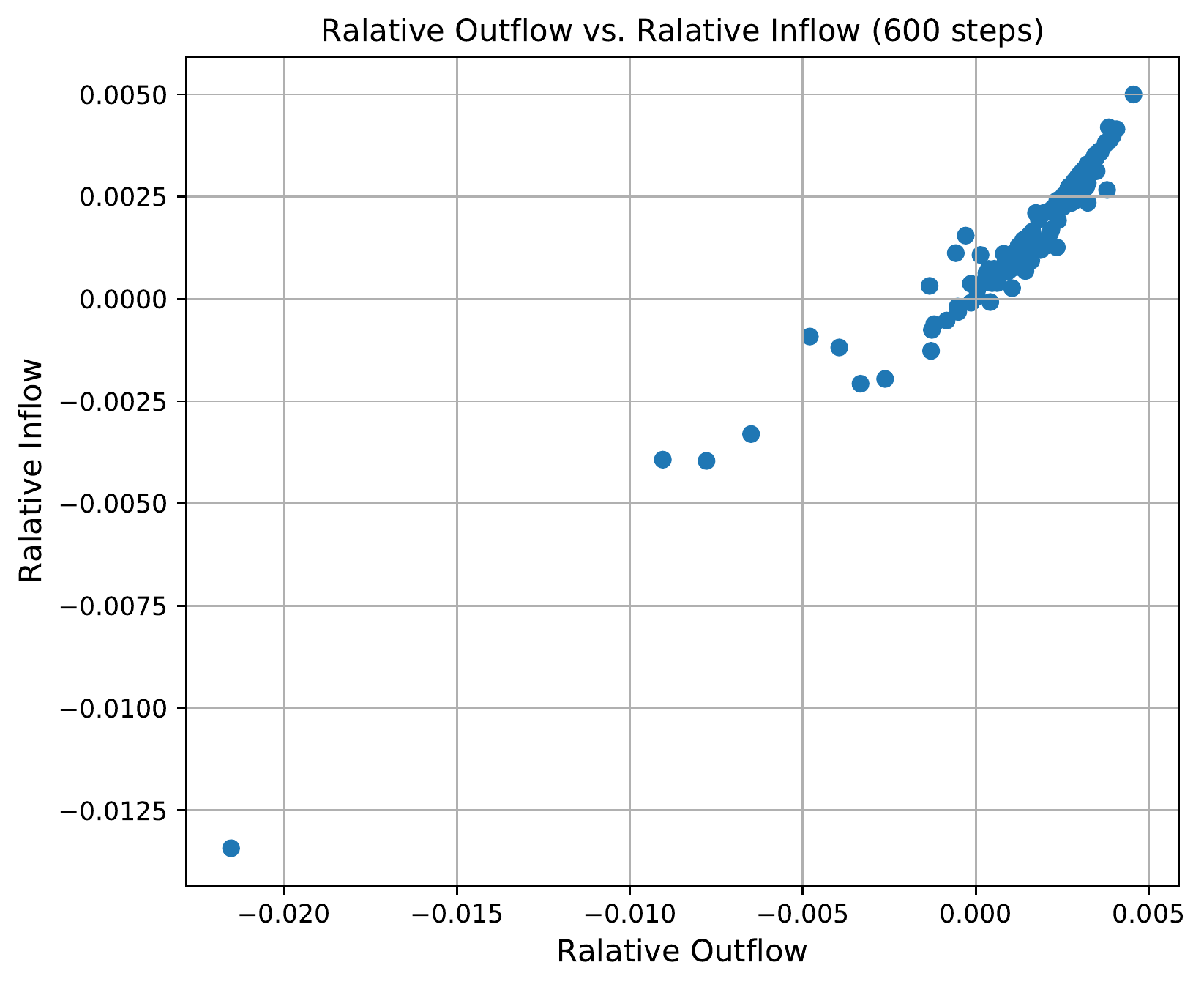}}
\caption{Average inflow vs. outflow for the diabetes model (left) and the breast cancer model (right).}
\label{fig:reciprocity_scatter}
\end{figure}

\begin{figure}
\centering
\subfloat[Diabetes]{\includegraphics[width=0.48\textwidth]{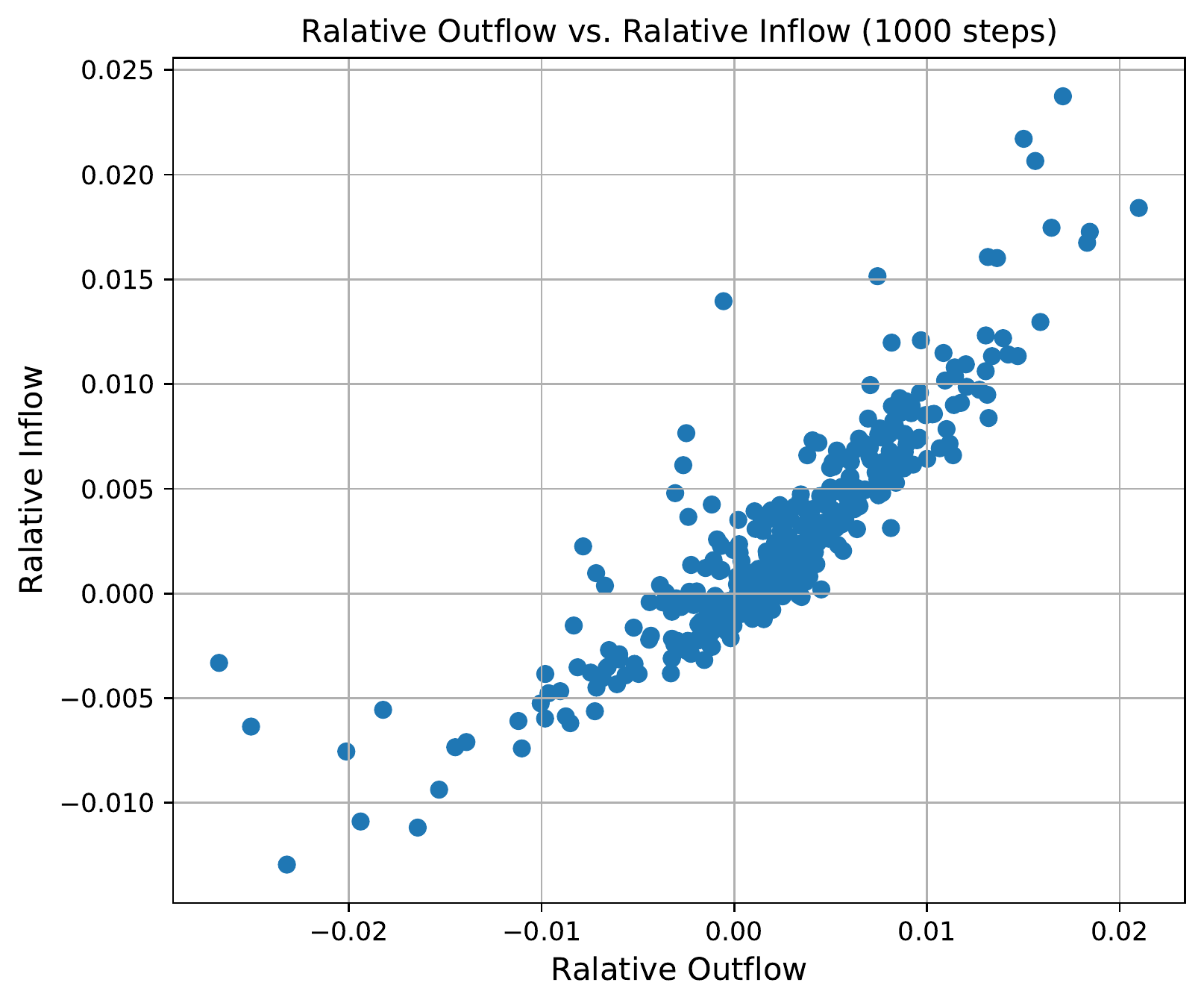}}\quad %
\subfloat[Breast cancer]{\includegraphics[width=0.48\textwidth]{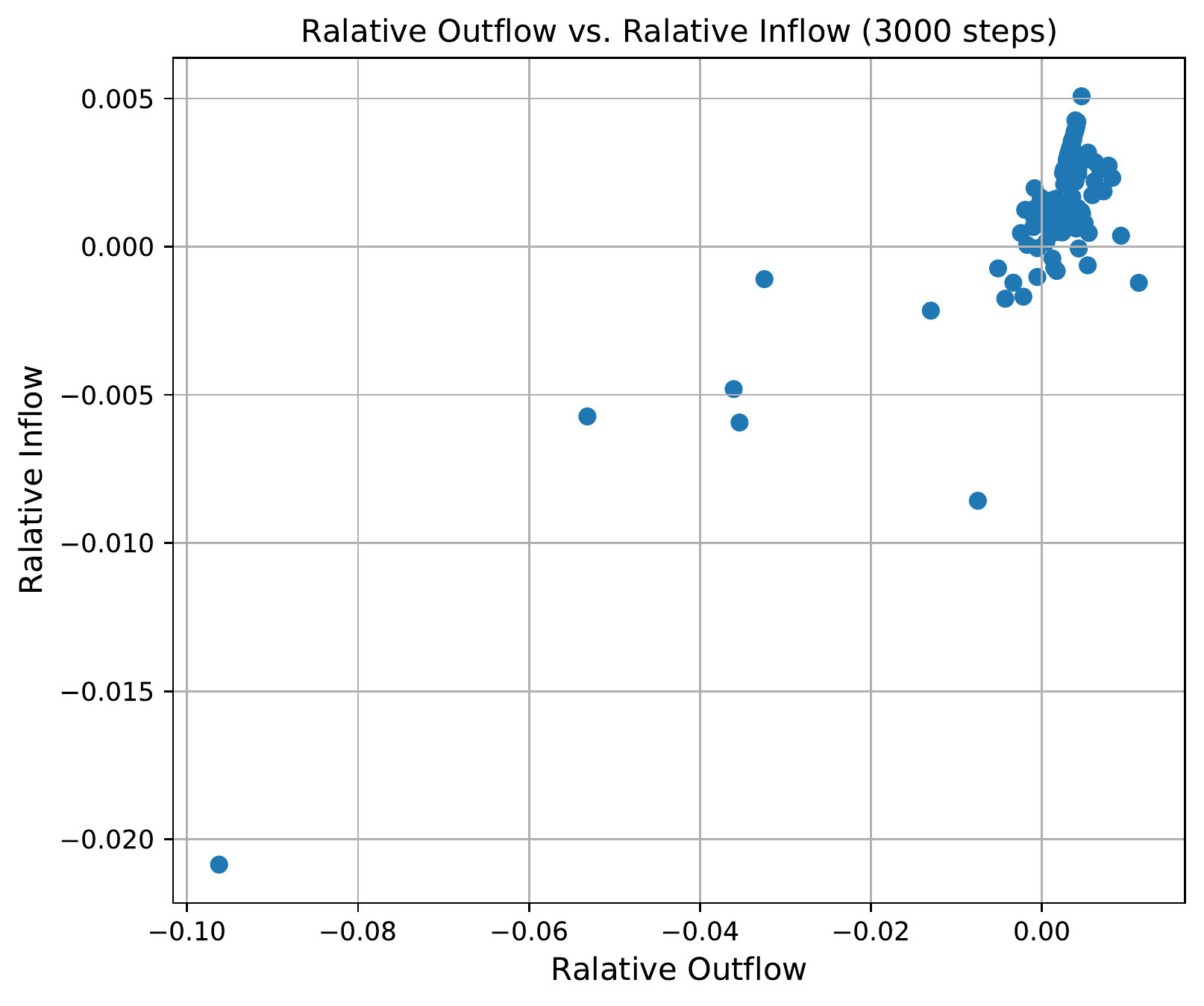}}
\caption{When the models are trained for much longer after convergence, reciprocity is gradually lost.}
\label{fig:reciprocity_scatter_2}
\end{figure}

\section{Conclusion and Open Directions}
\label{sec:open}

Individuals often contribute data to machine-learning models while also benefiting from the predictions of these models. We use measures of training data influence to propose measures of the contribution (outflow of influence) and the benefit (inflow of influence), and the balance between outflow and inflow. We found that models trained using variants of Gradient Descent---under assumptions that the deployment distribution is similar to the training distribution, formalized in Theorem~\ref{thm:main}---are highly reciprocal. There are several open directions. 

Our measure and claims around reciprocity are based on \emph{how} we compute influence. Are there other measures of influence besides \marginal{} and \trackin? 
One candidate is the Shapley value. The challenge however is in making it computationally feasible; every evaluation of the set function (over training examples belonging to a set of individuals) involves a costly training run. Perhaps there is a clever way to make this cheaper in a restricted class of models.


When the distributional assumption is not satisfied, can we modify training algorithms to enforce reciprocity? For instance, by sampling individual's data or by modifying gradients, similarly to techniques used in differential privacy~\cite{dpsgd}
.


\bibliographystyle{ACM-Reference-Format}
\bibliography{references}

\appendix

\section{Additional Experiments}

\paragraph{Sanity Checks on MovieLens:} We measure the noise in the inflows and outflows due to randomness in the training process.

\begin{figure}
    \centering
    \subfloat[\centering outflows]{{\includegraphics[width=5cm]{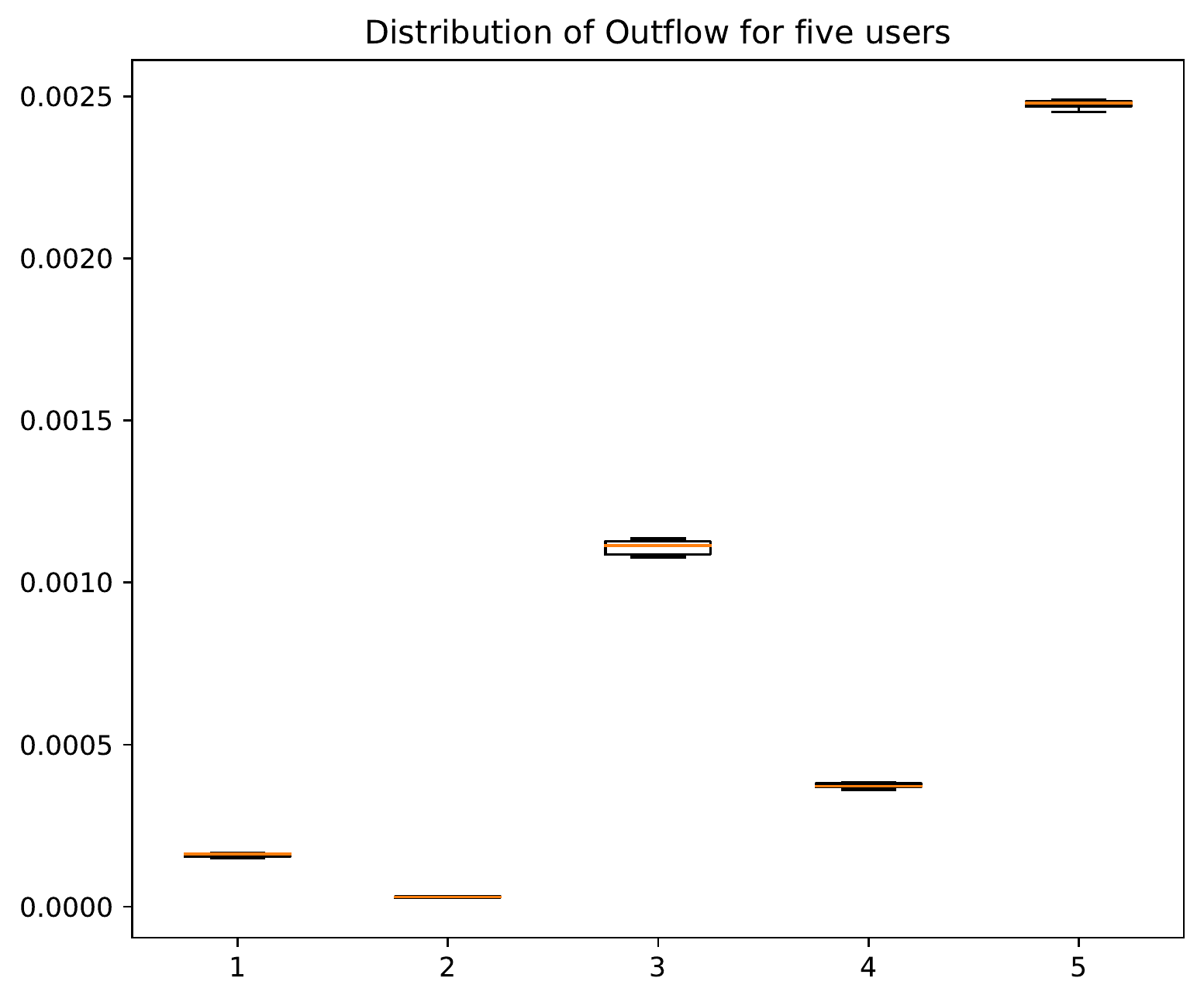}}}%
    \qquad
    \subfloat[\centering inflows ]{{\includegraphics[width= 5cm]{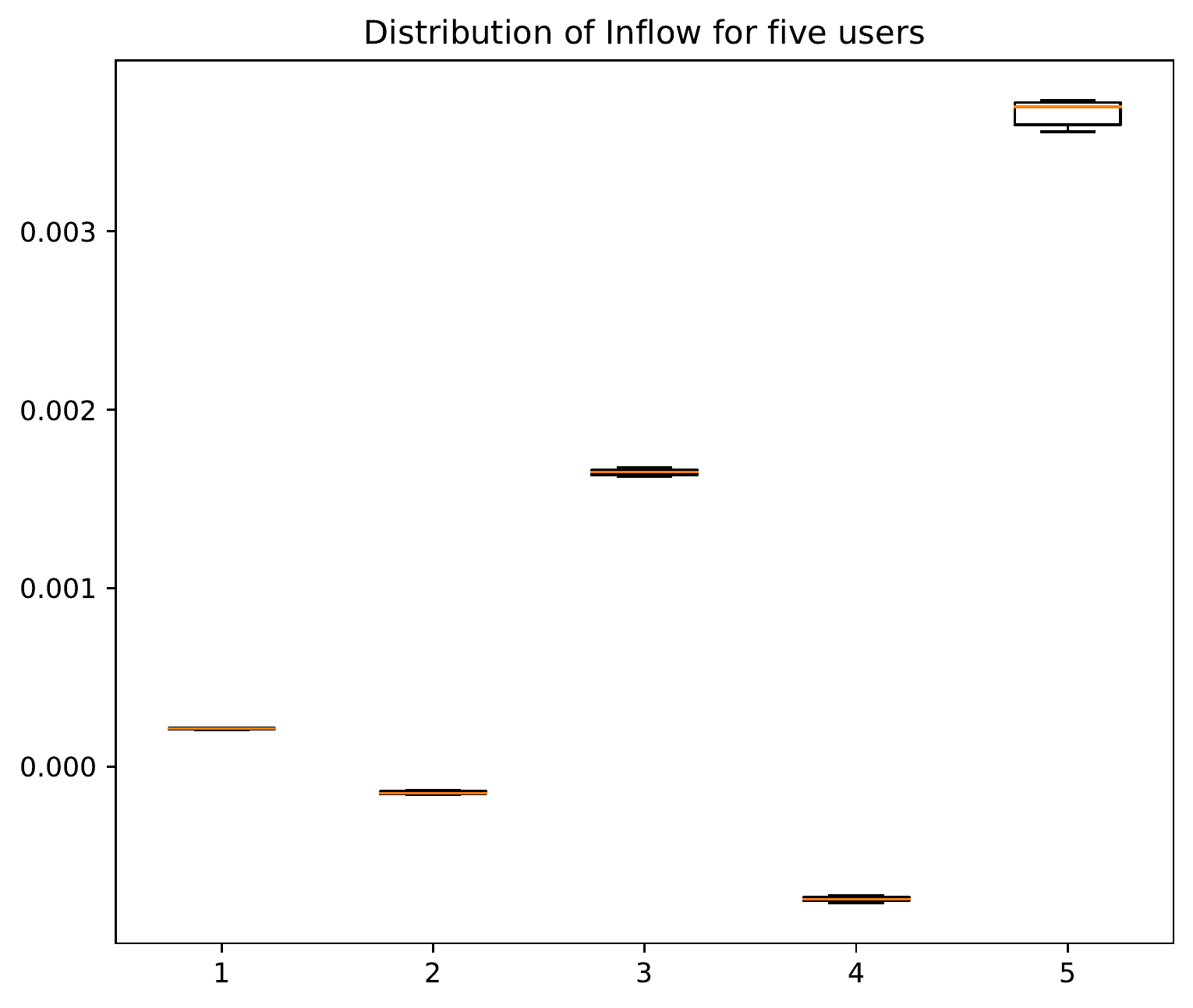}}}%
    \caption{Distribution of inflow and outflows across 10 runs for 5 individuals. The box edges show the upper and lower quartiles, the whiskers show the fifth and ninety-fifth percentile.
    }%
    \label{fig:five-examples}%
\end{figure}

We plot the distributions across five arbitrarily chosen individuals. See Figure~\ref{fig:five-examples}. To aid interpretation, we normalize the inflows and outflows by the total inflow (which approximates the total change in loss). Therefore, one can read the numbers as the fraction of overall inflow, (or approximately the fraction of total loss reduction). We observe that the inflows and outflows are consistent across runs, however, there is some variation (stemming from the random initialization of the embeddings).

\begin{figure}
    \centering
    \includegraphics[width=0.45\textwidth]{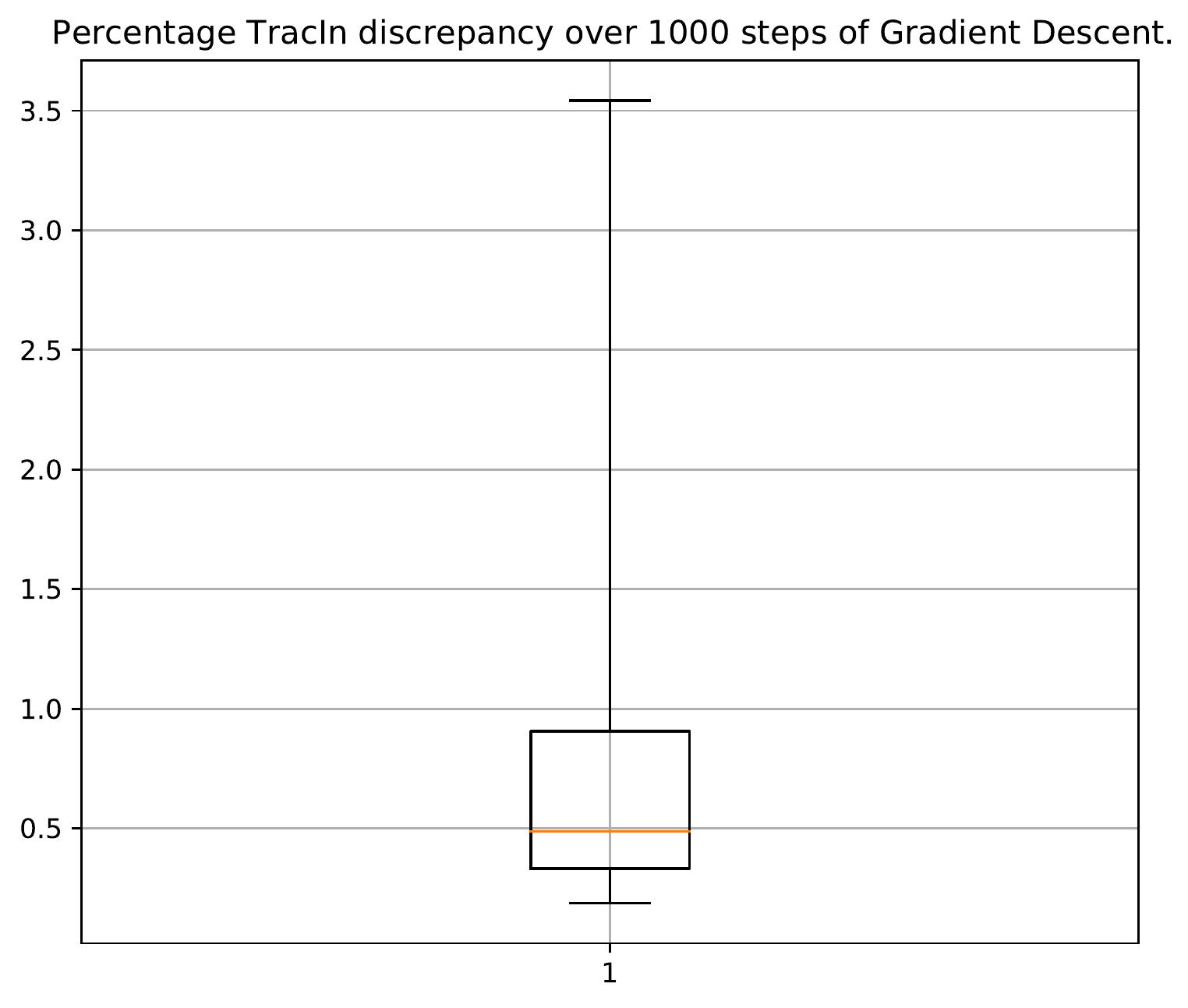}
    \caption{\trackin{} approximation discrepancy across 1000 steps of Gradient Descent.}
    \label{fig:approximation}
\end{figure}

Next, we study the discrepancy in the measurement of outflows due to the first-order approximation of \trackin{} (recall Remark~\ref{rem:first-order}). Figure~\ref{fig:approximation} shows the relative discrepancy for one run. The numerator is the sum of the gradient dot products (in Equation~\ref{eq:tracin}) across examples for one step of gradient descent minus the total change in loss across $Z'$. The denominator is the total change in loss across $Z'$. The percentage relative discrepancy remains relatively small; its 80th percentile is 1.1\%. This can be further reduced by using a smaller learning rate.

\end{document}